
\documentclass[letterpaper, 10 pt, conference]{ieeeconf}  

\IEEEoverridecommandlockouts                              

\overrideIEEEmargins                                      



\usepackage{cite}
\usepackage{amsmath,amssymb,amsfonts}
\usepackage{algorithmic}
\usepackage{graphicx}
\usepackage{textcomp}
\usepackage{xcolor}
\usepackage{cite}
\usepackage{amsmath,amssymb,amsfonts}
\usepackage{algorithmic}
\usepackage{graphicx}
\usepackage{textcomp}
\usepackage{xcolor}
\usepackage{booktabs}
\usepackage{multirow}
\usepackage{float}
\usepackage{stfloats}
\usepackage{graphicx}
\usepackage{placeins}
\usepackage{hyperref}
\usepackage{algorithm}
\usepackage{comment}

\newtheorem{definition}{Definition}
\newtheorem{theorem}{Theorem}

\title{Kernel-Based Enhanced Oversampling Method for Imbalanced Classification}

\author{Wenjie LI\(^{1,2}\), Sibo Zhu\(^{1,2}\), Zhijian Li\(^{1,2}\), and Hanlin Wang\(^{1,2*}\)
\thanks{*Corresponding author: Hanlin Wang (r130026135@mail.uic.edu.cn)}
\thanks{\(^{1}\)Faculty of Science and Technology, BNU-HKBU United International College, Zhuhai, China}%
\thanks{\(^{2}\)Guangdong Provincial Key Laboratory of Interdisciplinary Research and Application for Data Science, BNU-HKBU United International College, Zhuhai, China}%
\thanks{Emails: \{r130026076, r130026221\}@mail.uic.edu.cn, zhijianli@uic.edu.cn, r130026135@mail.uic.edu.cn}%
}

\begin{document}

\maketitle
\thispagestyle{empty}
\pagestyle{empty}

\begin{abstract}
This paper introduces a novel oversampling technique designed to improve classification performance on imbalanced datasets. The proposed method enhances the traditional SMOTE algorithm by incorporating convex combination and kernel-based weighting to generate synthetic samples that better represent the minority class. Through experiments on multiple real-world datasets, we demonstrate that the new technique outperforms existing methods in terms of F1-score, G-mean, and AUC, providing a robust solution for handling imbalanced datasets in classification tasks.
\end{abstract}

\section{Introduction}

Imbalanced datasets are a pervasive issue in the domain of classification, where the distribution of classes is skewed, with one class (often referred to as the minority class) being significantly underrepresented compared to the other (the majority class). The imbalance issue is especially problematic in classification tasks, as traditional machine learning algorithms are generally designed to maximize overall accuracy, leading them to favor the majority class. Consequently, it results in a bias where the model performs well on the majority class but poorly on the minority class, which is often the class of greater interest \cite{rawat2022reviewmethodshandlingclassimbalanced}.

The implications of this issue are evident in a wide range of real-world applications. For example, in fraud detection, fraudulent transactions are rare compared to legitimate ones, yet accurately identifying these minority instances is crucial \cite{ali2022financial}. Similarly, in medical diagnosis, diseases such as cancer are far less common than healthy outcomes, but the cost of missing a diagnosis is extremely high \cite{jiang2023deep}. In many scenarios, the presence of an imbalanced dataset not only diminishes the effectiveness of the classification model but also raises concerns about its reliability and fairness.

Addressing the challenges posed by imbalanced datasets is therefore critical to improving the accuracy and robustness of classification models, especially in applications where the minority class holds significant importance. Several strategies have been developed to tackle the issues associated with imbalanced datasets, and the most popular methods include sampling methods, cost-sensitive learning, and ensemble methods.

The two primary techniques to balance the class distribution are \textit{oversampling} and \textit{undersampling}. Oversampling methods like Synthetic Minority Over-sampling Technique (SMOTE) create synthetic examples for the minority class to balance the dataset. However, oversampling can lead to overfitting, especially if synthetic examples are too similar to existing minority instances \cite{Chawla_2002}. Undersampling reduces the number of majority class instances to balance the dataset, which can be effective but risks losing valuable information from the majority class \cite{LIN201717}.

The Cost-Sensitive Learning adjusts the learning algorithm to pay more attention to the minority class by assigning higher misclassification costs to minority class errors. This method directly integrates the imbalanced nature of the dataset into the model's learning process, making the model more sensitive to minority class instances\cite{9064578}.

Each of these methods has its strengths and limitations, and the choice of method often depends on the specific characteristics of the dataset and the application. For instance, in scenarios where the minority class is extremely rare and critical, such as in medical diagnoses or fraud detection, a combination of techniques might be necessary to achieve reliable performance.

This paper introduces a novel enhancement to the SMOTE algorithm, called KWSMOTE, to improve classification performance on imbalanced datasets by generating more realistic synthetic samples through a kernel-based weighting system. 

\section{Related Work}
Numerous approaches have been developed to address imbalanced learning. Over the past two decades, various surveys and literature reviews have organized and examined the distinct features and characteristics of these methods \cite{1japkowicz2002class,2chawla2004special,3su2011outlier,4he2009learning,5chawla2010data,6branco2016survey, IPLOF}. This is primarily due to the continuous emergence of new real-world applications, where data imbalance is inherently present \cite{7haixiang2017learning}. Recently, many imbalanced learning techniques have been integrated into Python libraries like \textit{imblearn} \cite{8lemaavztre2017imbalanced}, focusing on either modifying training data or adjusting algorithms to address class misclassification costs.

One notable approach to addressing class imbalance is the Synthetic Minority Oversampling Technique (SMOTE), proposed by \cite{Chawla_2002}. Essentially, SMOTE generates new samples by performing random linear interpolation between minority class samples and their nearest neighbors. Creating a specified number of synthetic minority samples reduces the data imbalance ratio, thereby enhancing the classification performance on imbalanced datasets. The detailed steps of SMOTE are outlined as follows:

\begin{enumerate}
    \item For each minority class sample $x_i$ with $i = 1, 2, \ldots, n$, compute the Euclidean distances to all other minority class samples to determine the $k$ nearest neighbors.
    \item Based on the desired oversampling rate $N$, randomly select $m$ neighbors from the $k$ nearest neighbors of each sample $x_i$. Denote these neighbors as $x_{ij}$ with $j = 1, 2, \ldots, m$.
    \item For each selected neighbor sample $x_{ij}$, create a new synthetic sample $p_{ij}$ using the formula:
    \begin{equation}
        p_{ij} = x_i + \text{rand}(0,1) \times (x_{ij} - x_i),
    \label{eq:smote_formula}
    \end{equation}
    where $\text{rand}(0,1)$ is a random value generated from a uniform distribution between 0 and 1. Continue this process until the dataset achieves the desired balance ratio.
\end{enumerate}

Many researchers have highlighted enhancements to the original SMOTE algorithm, focusing on different methods for generating synthetic samples. These modifications aim to better address the challenges posed by unbalanced datasets, making the improved versions more effective and suitable for various data imbalance situations.

One limitations of the classic SMOTE algorithm is that it can not properly handle boundary samples. SMOTE generates synthetic samples by linear interpolation between minority samples and their neighbors resulting in marginalization at the edges of the data distribution \cite{ImproveSMOTEwang2021research}. Data marginalization can blur the separation between classes, making classification more challenging. An improved SMOTE algorithm based on normal distribution was proposed to reduce the generation of edge samples by concentrating synthetic data near the minority class center. The new synthetic samples $p_i$ are generated using the following formula:

\begin{equation}
    p_i = x'_i + \text{rnorm}(\mu=1, \sigma) \cdot (x'_{\text{center}} - x'_i)
\end{equation}

\noindent where $\text{rnorm}(\mu=1, \sigma)$ is a random number drawn from a normal distribution with mean $\mu = 1$ and adjustable standard deviation $\sigma$. Compared to the uniform distribution in SMOTE, the normal weight ensures that the generated samples are more likely to cluster near the minority class center, thereby maintaining the internal structure of the minority class and preventing boundary samples from becoming overly marginalized.

Another drawback of SMOTE is that the generated synthetic samples are confined to the line segment between seed samples, which fails to represent the true distribution of the data. The SNOCC (Sigma Nearest Oversampling based on Convex Combination) \cite{SNOCCzheng2015oversampling} method was proposed to address this limitation. This approach extends the concept of linear combinations, allowing the generated synthetic samples to be distributed more broadly within the sample space, not only on the line segment. By expanding to multiple seed samples, SNOCC enables new samples to be distributed within the convex hull formed by the original samples. However, \cite{SNOCCzheng2015oversampling} doesn't provide any theoretical support and doesn't detail how to choose the seed number.

In summary, SMOTE and its variations have been widely used to address imbalanced datasets, but they still face limitations such as generating samples along line segments between minority instances and issues with boundary marginalization, which can blur class separability. Our approach aims to improve upon these drawbacks of SMOTE by developing a more adaptive oversampling technique that better replicates the true distribution of minority classes, preserves class boundaries, and enhances classification performance on imbalanced data.

\section{Methodology}

Before we formally elaborate on our methodology, several definitions are presented below.
\begin{definition}\label{def1}
    A \textit{convex set} is a set $\Omega$ in a vector space such that, for any two points $x_1, x_2 \in \Omega$, and any $w \in [0,1]$, the point $w x_1 + (1 - w) x_2$ also belongs to $\Omega$. 
\end{definition}

\begin{definition}\label{def2}
   A \textit{convex combination} of points $\{ x_0, x_1, \dots, x_k \}$ is any point $x$ that can be expressed as:
\begin{equation*}\label{convex_combination_definition}
    x = \sum_{j=0}^{k} w_j x_j,
\end{equation*}
where each $w_j \geq 0$ and $\sum_{j=0}^{k} w_j = 1$. 
\end{definition}

In our kernel-weighted SMOTE (KWSMOTE), we propose the following method to generate new samples $x_{ik}$,
\begin{align}\label{kw1}
    x_{ik} &= x_{i} + \frac{1}{D_{ik}} \sum_{j=0}^{k} w_{ij} (x_i^{(j)} - x_i) \nonumber \\ 
            & = \left(1-\frac{1}{D_{ik}} \sum_{j=0}^{k} w_{ij}\right)x_i + \frac{1}{D_{ik}} \sum_{j=0}^{k} w_{ij} x_i^{(j)},
\end{align}
where $x_{i}$ is the selected point, $x_i^{(j)}$ is the $j^{th}$ nearest neighbor point of $x_{i}$ and $x_i^{(0)} = x_i$, $w_{ij}$ is the weight and $D_{ik}$ is a coefficient. Note that if we let the coefficient $D_{ik} = \sum_{j=0}^{k} w_{ij}$, following definition \ref{def2}, equation (\ref{kw1}) can be written as a convex combination,
\begin{equation}\label{cb}
    x_{ik} = \nu_{i0} x_i^{(0)}+\nu_{i1} x_i^{(1)}+\dots+\nu_{ik} x_i^{(k)},
\end{equation}
where $\nu_{ij} = w_{ij}/D_{ik} > 0 $ and $\sum_{j=0}^{k} \nu_{ij} = 1$.
Consider a set of points within the $k^{th}$ nearest neighbor point of $x_{i}$, we can show that this is a convex set.

\begin{theorem}\label{thm1}
    The set 
    \[
    \Omega_{ik} = \{ x \mid \| x - x_i \| \leq L_{ik} \}
    \]
    is a convex set, where $L_{ik} = \| x_i^{(k)} - x_i \|$.
\end{theorem}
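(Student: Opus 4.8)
The plan is to verify directly the defining property of a convex set from Definition~\ref{def1}: given any two points of $\Omega_{ik}$ and any mixing weight $w \in [0,1]$, their convex combination must again lie in $\Omega_{ik}$. The key observation is that $\Omega_{ik}$ is simply the closed ball of radius $L_{ik}$ centered at $x_i$, so the argument reduces to the standard fact that norm-balls are convex, which follows from the triangle inequality together with the absolute homogeneity of the norm.

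Concretely, I would fix arbitrary points $y_1, y_2 \in \Omega_{ik}$, so that $\|y_1 - x_i\| \leq L_{ik}$ and $\|y_2 - x_i\| \leq L_{ik}$, and fix any $w \in [0,1]$. Setting $z = w y_1 + (1-w) y_2$, the goal is to bound $\|z - x_i\|$. The one algebraic step that makes everything work is to rewrite the center using $w + (1-w) = 1$, namely $x_i = w x_i + (1-w) x_i$, so that
\[
z - x_i = w(y_1 - x_i) + (1-w)(y_2 - x_i).
\]

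Applying the triangle inequality and then pulling the nonnegative scalars $w$ and $1-w$ out of the norm yields
\[
\|z - x_i\| \leq w\|y_1 - x_i\| + (1-w)\|y_2 - x_i\| \leq w L_{ik} + (1-w) L_{ik} = L_{ik},
\]
so $z \in \Omega_{ik}$, and convexity follows. Since every step is an elementary inequality, I do not anticipate a genuine obstacle; the only point requiring any care is the decomposition of $x_i$ as a convex combination of itself, which is precisely what lets the triangle inequality interact cleanly with the radius bound. It is worth noting that $L_{ik}$ plays no special role in the proof beyond being a fixed nonnegative radius, so the same argument establishes that any closed ball in a normed space is convex. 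The specific value $L_{ik} = \|x_i^{(k)} - x_i\|$ matters only afterward: because every neighbor $x_i^{(j)}$ satisfies $\|x_i^{(j)} - x_i\| \leq L_{ik}$, each lies in $\Omega_{ik}$, and hence the synthetic point $x_{ik}$ produced by the convex combination in (\ref{cb}) is guaranteed to fall inside $\Omega_{ik}$ as well.
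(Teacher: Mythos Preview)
Your proposal is correct and follows essentially the same argument as the paper: both pick two arbitrary points of $\Omega_{ik}$, form their convex combination, rewrite the difference with the center as $w(y_1 - x_i) + (1-w)(y_2 - x_i)$, and then apply the triangle inequality together with homogeneity to bound by $L_{ik}$. Your additional remarks about $L_{ik}$ being an arbitrary nonnegative radius and about the consequence for the synthetic point $x_{ik}$ go slightly beyond what the paper states in its proof, but they are correct and helpful context.
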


\begin{proof}
    Let $x, y$ be any two elements in $\Omega_{ik}$ and $w \in [0, 1]$. Define $z = w x + (1 - w) y$. We need to show that $z \in \Omega_{ik}$.
    Using the triangle inequality and properties of norms, we have:
    \begin{align*}
    \| z - x_i \| &= \left\| w (x - x_i) + (1 - w)(y - x_i) \right\| \\
    &\leq w \| x - x_i \| + (1 - w) \| y - x_i \| \\
    &\leq w L_{ik} + (1 - w) L_{ik} \\
    &= \left( w + 1 - w \right) L_{ik} \\
    &= L_{ik}.
    \end{align*}
    Therefore, $z \in \Omega_{ik}$, and so $\Omega_{ik}$ is convex.
\end{proof}

\begin{theorem}\label{thm2}
    If $\Omega$ is a convex set, then every convex combination of points in $\Omega$ is also in $\Omega$.
\end{theorem}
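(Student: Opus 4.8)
The plan is to argue by induction on the number of points in the convex combination, reducing the general case to the two-point case, which is exactly the defining property of a convex set in Definition~\ref{def1}. Following Definition~\ref{def2}, write a convex combination of points $x_0, x_1, \dots, x_k \in \Omega$ as $x = \sum_{j=0}^{k} w_j x_j$ with each $w_j \geq 0$ and $\sum_{j=0}^{k} w_j = 1$, and induct on $k$.

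For the base case $k = 1$, the combination is $x = w_0 x_0 + w_1 x_1$ with $w_0 + w_1 = 1$ and $w_0, w_1 \in [0,1]$; this is precisely a point of the form $w\, x_0 + (1-w)\, x_1$, which lies in $\Omega$ by Definition~\ref{def1}. For the inductive step, I assume every convex combination of $k$ points of $\Omega$ lies in $\Omega$, and take a convex combination $x = \sum_{j=0}^{k} w_j x_j$ of $k+1$ points. The key idea is to peel off the last point and renormalize the remaining weights: setting $s = \sum_{j=0}^{k-1} w_j = 1 - w_k$, I define $x' = \sum_{j=0}^{k-1} (w_j / s)\, x_j$, whose coefficients $w_j / s$ are nonnegative and sum to $1$, so $x'$ is a convex combination of $k$ points and lies in $\Omega$ by the inductive hypothesis. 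Then $x = s\, x' + (1 - s)\, x_k$ is a convex combination of the two points $x', x_k \in \Omega$, hence $x \in \Omega$ by Definition~\ref{def1}, completing the induction.

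The only subtlety—and the step most worth stating explicitly—is the degenerate case $w_k = 1$ (equivalently $s = 0$), where the renormalization $w_j / s$ is undefined. I would dispatch this separately: if $w_k = 1$ then all other weights vanish and $x = x_k \in \Omega$ trivially, which lets me assume $s > 0$ in the main argument so that the division is well defined. Everything else is routine algebra verifying that the renormalized weights $w_j/s$ again form a valid convex-combination weight vector, so I do not expect any genuine obstacle beyond carefully separating out this boundary case.
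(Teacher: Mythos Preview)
Your proposal is correct and follows essentially the same approach as the paper: induction on the number of points, peeling off the last point, renormalizing the remaining weights to apply the inductive hypothesis, and then invoking the two-point definition of convexity, with the degenerate case $s=0$ handled separately. The only cosmetic differences are that the paper indexes from $1$ and takes the single-point case as its base, whereas you index from $0$ and start at the two-point case.
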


\begin{proof}

    \textbf{Base Case ($n = 1$):}

    For $n = 1$, the convex combination reduces to a single point $x = x_1$, where the weight is $w_1 = 1$. Since $x_1 \in \Omega$, it follows that $x \in \Omega$.

    \textbf{Inductive Step:}

    Assume that any convex combination of $k$ points in $\Omega$ is also in $\Omega$.
    For $k + 1$ points $x_1, x_2, \dots, x_{k+1} \in \Omega$ with weights $w_1, w_2, \dots, w_{k+1} \geq 0$ such that $\sum_{i=1}^{k+1} w_i = 1$, define $s = \sum_{i=1}^{k} w_i$.
    
    If $s = 0$, then $w_{k+1} = 1$, and so $x = x_{k+1} \in \Omega$.

    If $s > 0$, let
    \[
    y = \frac{1}{s} \sum_{i=1}^{k} w_i x_i.
    \]
    Since $\sum_{i=1}^{k} w_i/s = 1$ and each $w_i/s \geq 0$, $y$ is a convex combination of $k$ points in $\Omega$. By the induction hypothesis, $y \in \Omega$.

    Now, express $x$ as a convex combination of $y$ and $x_{k+1}$:
    \[
    x = s y + w_{k+1} x_{k+1}.
    \]
    Given that $s + w_{k+1} = 1$ and $s$, $w_{k+1} \geq 0$, and $y, x_{k+1} \in \Omega$, the convexity of $\Omega$ implies that $x \in \Omega$.

    Therefore, by induction, any convex combination of points in $\Omega$ is also in $\Omega$.
\end{proof}

Theorem \ref{thm2} ensures that the new synthetic samples lie within the convex hull of the data, instead of being on the line segment of two data points.

The next issue we need to consider is the selection of parameter $k$ and the weight $w_{ij}$. It is natural that if we select a large $k$, more data points will be considered. However, if the selected point or its neighbors contain noise or are distant, the newly generated points will also be affected. Therefore, we propose using the Gaussian kernel function as the weight
\[ 
K(x, x') = \exp \left( - \frac{\| x - x' \|^2}{2 \sigma^2} \right).
\]
The kernel weight value depends on the similarity between two points. A higher similarity results in a larger weight for a nearby point, giving it a greater influence, while a more distant point will have a smaller weight, reducing its impact. Noise and outliers typically result in negligible weights as the Gaussian kernel decays toward zero rapidly when the two input vectors are far apart. As distant points and even outliers have low effects, the kernel method avoids new points from being generated near the boundaries between classes, which helps to reduce the likelihood of blurring the boundary between minority and majority classes, as presented in Fig. \ref{fig:generate_image}. 

\begin{figure}[!ht]
    \centering
    \includegraphics[width=0.5\textwidth]{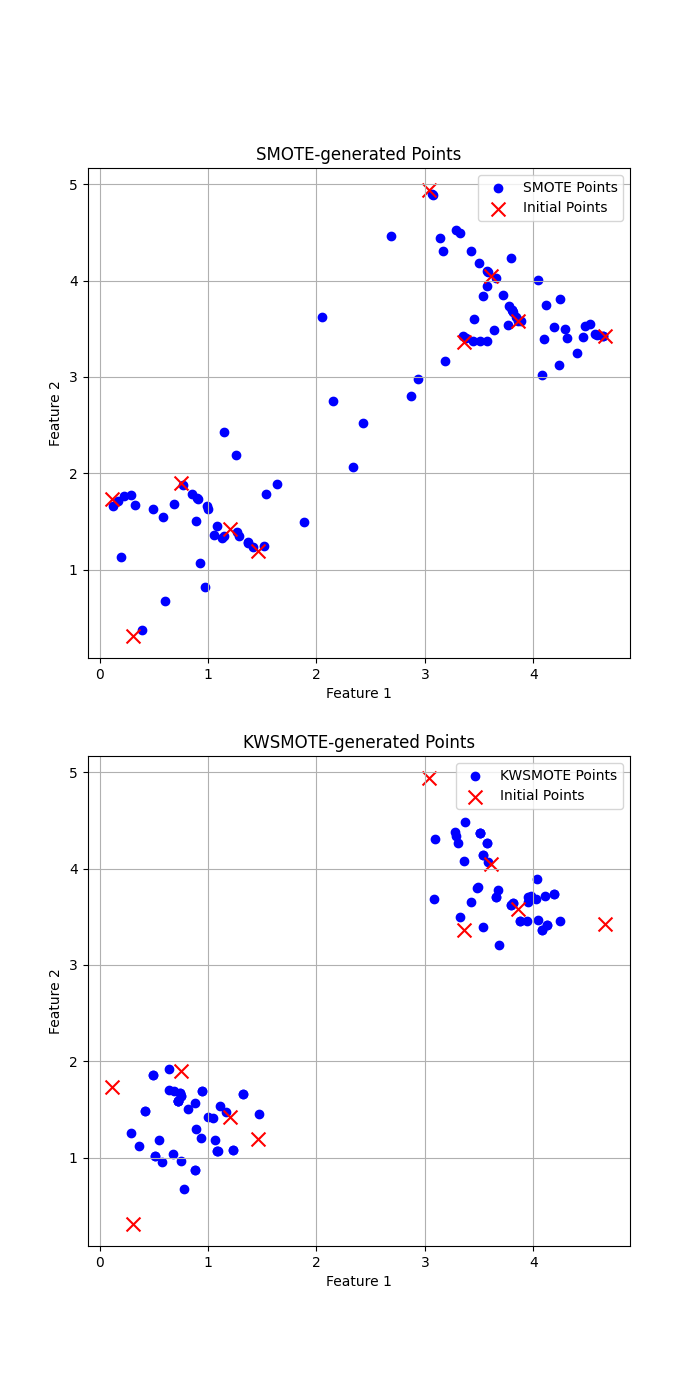}
    \caption{\small The generated points of SMOTE and KWSMOTE}
    \label{fig:generate_image}
\end{figure}

Applying the Gaussian kernel, the convex combination (\ref{cb}) can be written as:
\begin{equation} 
x_{ik} = \sum_{j=0}^{k} \frac{K(x_i, x_i^{(j)})}{D_{ik}} x_i^{(j)}.
\end{equation}

Specifically, the weight of the selected point $x_i = x_i^{(0)}$ in Equation~(\ref{kw1}) is
\begin{align} 
w_{i0} = K(x_i, x_i) = \exp \left( - \frac{\| x - x_i \|^2}{2 \sigma^2} \right) = 1,
\end{align}
which is the largest possible value. Therefore, the selected point always has the highest weight, making the newly generated points' position closer to the selected point, preserving local features, which can also be obtained by Fig. \ref{fig:generate_image}. 
It is also worth noting that if the kernel weight of the second nearest point $x_i^{(2)}$ is small and below a certain threshold, it indicates that both the selected point and its associated neighbor points are relatively distant, and the new point can be disregarded. This prevents the generation of a point in near places of outliers, avoiding interference with the classifier.

With the noise resistance in place, the nearest neighbor number \( k \) can be adjusted to a larger value without worrying about the impact of outliers. Moreover, the parameter \( k \) and the bandwidth \( \sigma \) of the Gaussian kernel can be finely tuned. By conducting hyperparameter optimization, we can obtain a set of parameters better suited to the dataset, allowing for the generation of more realistic points and improving the classifier's performance.

The pseudocode of the algorithm is given in Algorithm 1.

\begin{algorithm}[H]

\caption{KWSMOTE Algorithm}

\textbf{Input:} Dataset $(X, y)$, Number of neighbors $k$, Convex points number $c$, Threshold $\tau$, Kernel width $\sigma$ (optional)

\textbf{Output:} Resampled dataset $(X_{resampled}, y_{resampled})$

\begin{algorithmic}[1]

\STATE Identify minority class $y_{min}$ and majority class $y_{maj}$
\STATE Compute the number of samples to generate: $n_{samples\_to\_generate} = |y_{maj}| - |y_{min}|$

\STATE Extract minority class samples $X_{min} \leftarrow X[y = y_{min}]$

\STATE Compute $k$ nearest neighbors for each $x \in X_{min}$
\IF{$\sigma$ is None}
    \STATE Compute $\sigma = \sqrt{\frac{\text{Var}(X_{min}) \times n_{features}}{2}}$ (default setting in scikit-learn's SVM)
\ENDIF

\STATE Initialize empty list $new\_samples$

\WHILE{length(new\_samples) $<$ $n\_samples\_to\_generate$}

    \STATE Randomly select a minority sample $x_{i} \in X_{min}$
    \STATE Find its $k$ nearest neighbors $x_j$
    \STATE Randomly select $c$ neighbors from $x_j$, denoted as $x_j^{(j)}$
    \STATE Compute Gaussian kernel weights for each neighbor in $x_j^{(j)}$: 
    \[
    w_j = \exp\left( \frac{-\|x_{i} - x_j^{(j)}\|^2}{2\sigma^2} \right)
    \]
    \IF{$\max(w_j) < \tau$}
        \STATE Skip this sample
    \ENDIF
    \STATE Generate new sample:
    \[
    x_{ik} = \sum_{j=0}^{c} \frac{w_j}{D_{ik}} x_i^{(j)}
    \]
    \STATE Append $x_{ik}$ to $new\_samples$
\ENDWHILE

\STATE Combine original samples $X$ with $new\_samples$ to form $X_{resampled}$
\STATE Assign new labels to $y_{resampled}$
\STATE \textbf{Return:} $(X_{resampled}, y_{resampled})$

\end{algorithmic}
\end{algorithm}

\section{Evaluation}
\subsection{Experiment Environment}
Our experiment is done with dual Intel(R) Xeon(TM) E5-2699 V4 processors clocked at 2.20 GHz and having 128.00 GB DDR4 RAM on a home server with Ubuntu 20.04 operating system loaded. The implementation was done through Python 3.8.

The experiment was conducted by considering four datasets available online: Blood Transfusion \cite{blood_transfusion_service_center_176}, Haberman \cite{haberman's_survival_43}, Breast Cancer Wisconsin (Diagnostic) \cite{breast_cancer_wisconsin_(diagnostic)_17}, and Diabetes \cite{smith1988using}, each serving distinct medical classification purposes. The Haberman dataset focuses on the survival of breast cancer patients who underwent surgery, based on cases studied at the University of Chicago between 1958 and 1970.

The Breast Cancer Wisconsin (Diagnostic) dataset features attributes derived from images of fine needle aspirates (FNA) of breast masses, describing the characteristics of cell nuclei. The Blood Transfusion dataset, collected from the Blood Transfusion Service Center in Hsin-Chu City, Taiwan, China, includes donor data to model donation behavior using variables like Recency, Frequency, and Time. The Diabetes dataset, from the National Institute of Diabetes and Digestive and Kidney Diseases, predicts diabetes onset using diagnostic measurements. A detailed description of these datasets is provided in Table \ref{tab:datasets}.

\begin{table*}[!t]
\centering
\caption{Summary of the Datasets (\# and \% indicate quantity and rate respectively)}
\begin{tabular*}{\textwidth}{@{\extracolsep{\fill}} l c c c c c}
\toprule
\textbf{Dataset} & \textbf{\# Samples} & \textbf{\# Features} & \textbf{\# Minority} & \textbf{\# Majority} & \textbf{\% Imbalance} \\
\midrule
Blood Transfusion & 748 & 4  & 178 & 570 & 3.2  \\
Haberman          & 306 & 3  & 81  & 225 & 2.78 \\
Breast Cancer Wisconsin & 569 & 30 & 212 & 357 & 1.68 \\
Diabetes          & 768 & 8  & 268 & 500 & 1.87 \\
\bottomrule
\end{tabular*}
\label{tab:datasets}
\end{table*}

\begin{table*}[ht]
\centering
\caption{Evaluation Metrics for Different Datasets and Classifiers}
\begin{tabular*}{\textwidth}{@{\extracolsep{\fill}} l l l c c c c c c c c}
\toprule
\multirow{2}{*}{Dataset} & \multirow{2}{*}{Methods} & \multicolumn{3}{c}{RandomForest} & & \multicolumn{3}{c}{SVM} \\
\cmidrule(lr){3-5} \cmidrule(lr){7-9}
 &  & F1-score & G-mean & AUC & & F1-score & G-mean & AUC \\
\midrule
Blood & raw & 0.5937 & 0.6343 & 0.6778 & & 0.4406 & 0.5244 & 0.6241 \\
 & SMOTE & 0.5721 & 0.6009 & 0.6312 & & 0.5864 & 0.6223 & 0.6604 \\
 & BorderlineSMOTE & 0.5972 & 0.6190 & 0.6421 & & 0.5717 & 0.6245 & \textbf{0.6813} \\
 & SVMSMOTE & 0.6000 & 0.6222 & 0.6451 & & 0.6000 & 0.6155 & 0.6734 \\
 & SNOCC & 0.5893 & 0.6201 & 0.6650 & & \textbf{0.5900} & 0.6250 & 0.6702 \\
 & \textbf{KWSMOTE} & \textbf{0.6180} & \textbf{0.6343} & \textbf{0.6778} & & 0.5864 & \textbf{0.6285} & 0.6737 \\
\midrule
Haberman & raw & 0.5933 & 0.6318 & 0.6728 & & 0.4177 & 0.5403 & 0.6987 \\
 & SMOTE & 0.6150 & 0.6240 & 0.6332 & & 0.6338 & 0.6440 & 0.6544 \\
 & SNOCC & 0.6210 & 0.6349 & 0.6505 & & 0.6382 & 0.6517 & 0.6753 \\
 & BorderlineSMOTE & 0.5973 & 0.6229 & 0.6653 & & 0.5810 & 0.6062 & 0.6331 \\
 & SVMSMOTE & 0.5896 & 0.6194 & 0.6503 & & 0.6235 & 0.6522 & 0.6823 \\
 & \textbf{KWSMOTE} & \textbf{0.6336} & \textbf{0.6642} & \textbf{0.6964} & & \textbf{0.6428} & \textbf{0.6563} & \textbf{0.6987} \\
\midrule
Breast Cancer & raw & 0.9683 & 0.9824 & 0.9832 & & 0.9279 & 0.9600 & 0.9931 \\
 & SMOTE & 0.9687 & 0.9834 & 0.9984 & & 0.9424 & \textbf{0.9671} & 0.9924 \\
 & SNOCC & 0.9705 & 0.9842 & 0.9980 & & 0.9400 & 0.9660 & 0.9928 \\
 & BorderlineSMOTE & 0.9653 & 0.9820 & 0.9937 & & 0.9093 & 0.9508 & 0.9923 \\
 & SVMSMOTE & 0.9692 & 0.9836 & \textbf{0.9989} & & 0.9378 & 0.9656 & 0.9936 \\
 & \textbf{KWSMOTE} & \textbf{0.9749} & \textbf{0.9859} & 0.9984 & & \textbf{0.9424} & 0.9670 & \textbf{0.9931} \\
\midrule
Diabetes & raw & 0.7298 & 0.7660 & 0.8040 & & 0.6861 & 0.7279 & 0.7722 \\
 & SMOTE & 0.7310 & 0.7635 & 0.7976 & & 0.6646 & 0.7140 & 0.7672 \\
 & SNOCC & 0.7280 & 0.7650 & \textbf{0.8060} & & 0.6740 & 0.7200 & 0.7700 \\
 & BorderlineSMOTE & 0.7198 & 0.7506 & 0.7829 & & 0.6547 & 0.7093 & 0.7689 \\
 & SVMSMOTE & 0.7346 & 0.7447 & 0.8023 & & 0.6704 & 0.7186 & 0.7694 \\
 & \textbf{KWSMOTE} & \textbf{0.7310} & \textbf{0.7660} & 0.8040 & & \textbf{0.6866} & \textbf{0.7286} & \textbf{0.7732} \\
\bottomrule
\end{tabular*}
\label{tab:results}
\end{table*}

The core methodology would deal with the performance of two classifiers: Support Vector Machine and Random Forest, in combination with three data sampling strategies: no sampling (raw data), the classic SMOTE method, SNOCC, our proposed method and two SMOTE variants inculde BorderlineSMOTE which focuses on generating synthetic samples near the decision boundary; SVMSMOTE which uses Support Vector Machines (SVM) to guide the synthetic sample generation process. The choice of SVM, therefore, is based on its high performance in high-dimensional spaces and its ability to build decision boundaries that best give the margin between classes; it results in a generalized classification task most importantly for the complex non-linear relationship. On the other hand, the Random Forest approach was chosen to perform well with large datasets, especially those having higher dimensions, and because of its robustness against overfitting due to the ensemble method. We applied both SVM and Random Forest to cover a great variety of classification situations and get the best from each model.

We don't need extra procedures to deal with outliers as our method can effectively avoid their influence.
Finally, the datasets were divided into training and testing sets at a 7:3 ratio using stratified random sampling to maintain the distribution of classes.

We benchmarked using three main evaluation metrics, F1-score, G-mean, and the Area Under the Receiver Operating Characteristic Curve (AUC).

\begin{itemize}
    \item \textbf{F1-Score:} This metric balances precision and recall, providing a single score that considers both false positives and false negatives. It is beneficial for imbalanced datasets, where focusing solely on accuracy might be misleading. The F1-Score is calculated as:
    \[
    \text{F1-Score} = 2 \times \frac{\text{Precision} \times \text{Recall}}{\text{Precision} + \text{Recall}},
    \]
    where Precision is given by:
    \[
    \text{Precision} = \frac{TP}{TP + FP},
    \]
    and Recall is:
    \[
    \text{Recall} = \frac{TP}{TP + FN}.
    \]
     True Positives (TP) represents the number of positive class samples that are correctly identified by the classifier. False Positives (FP) and False Negatives (FN) represents the number of negative and positive class samples that are incorrectly classified as positive and negative, respectively.
    
    \item \textbf{G-mean:} The G-mean represents the geometric mean of True Positive Rate (TPR) and True Negative Rate (TNR). It balances the trade-off between the detection of the positive class and the correct identification of the negative class, offering a robust measure of performance in the presence of class imbalance. It is calculated as:
    \[
    \text{G-mean} = \sqrt{\text{TPR} \times \text{TNR}},
    \]
    where True Positive Rate is:
    \[
    \text{TPR} = \frac{TP}{TP + FN},
    \]
    and True Negative Rate is:
    \[
    \text{TNR} = \frac{TN}{TN + FP}.
    \]
    
    \item \textbf{AUC:} The Area Under the Receiver Operating Characteristic Curve (AUC) reflects the classifier's ability to differentiate between classes, with higher values indicating better performance. The AUC provides a summary of the trade-off between the true positive rate and the false positive rate across different threshold settings, where a score of 1.0 indicates perfect classification, and 0.5 represents performance equivalent to random guessing.
\end{itemize}

\subsection{Results and Analysis}

The experimental results for the Blood Transfusion, Haberman, Breast Cancer Wisconsin (Diagnostic), and Diabetes datasets are summarized in Table \ref{tab:results}. Bolded cells indicate the best performance metrics achieved by different sampling strategies.

The results demonstrate that the KWSMOTE method consistently improves classification performance compared to the raw and classic SMOTE data, particularly in terms of F1-score, AUC, and G-mean.

For the Blood Transfusion dataset, KWSMOTE achieved the best results for most metrics, especially with the Random Forest classifier, where it significantly outperformed other methods. Specifically, KWSMOTE improved the G-mean by 5.65\% for the Random Forest classifier. In addition, KWSMOTE is still the best in terms of AUC and G-mean for the SVM classifier. Notably, SVMSMOTE also showed competitive performance, achieving an AUC of 0.6734 for SVM and 0.6451 for Random Forest, making it one of the stronger resampling techniques. BorderlineSMOTE performed slightly better than SMOTE in most cases, with an AUC of 0.6813 for SVM, the highest among all methods.

In the Haberman dataset, KWSMOTE continued to show superior results across all metrics, outperforming both SMOTE and the raw data. KWSMOTE increased the F1-score by 3.03\% for the SVM classifier and by 3.04\% for the Random Forest classifier compared to SMOTE. Additionally, the AUC values improved by 6.30\% for Random Forest and 4.45\% for SVM. BorderlineSMOTE and SVMSMOTE also provided notable improvements over SMOTE, with SVMSMOTE achieving an AUC of 0.6823 for SVM and 0.6503 for Random Forest, reinforcing its potential in imbalanced data classification.

KWSMOTE demonstrated consistent superiority for the Breast Cancer Wisconsin (Diagnostic) dataset, particularly with the Random Forest classifier, achieving a G-mean of 0.9859. SVMSMOTE achieved the highest AUC of 0.9989 for Random Forest, showing that it can be a competitive alternative in certain cases. The Diabetes dataset also highlighted KWSMOTE's strengths. The G-mean for KWSMOTE with the SVM classifier increased by 2.02\% compared to SMOTE, and by 0.33\% for the Random Forest classifier compared to SNOCC, which performed best in AUC with an improvement of 1.05\% over SMOTE. Meanwhile, SVMSMOTE attained a G-mean of 0.7447 for Random Forest and an AUC of 0.7694 for SVM, further emphasizing its utility. BorderlineSMOTE, while slightly behind, still improved upon SMOTE, achieving an AUC of 0.7689 for SVM.

Overall, KWSMOTE consistently outperformed the raw, SMOTE, BorderlineSMOTE, SVMSMOTE, and SNOCC methods across all datasets and classifiers, providing significant improvements in classification metrics. The findings underscore KWSMOTE's effectiveness as a robust sampling strategy for imbalanced datasets, offering notable gains in F1-score, AUC, and G-mean. Meanwhile, SVMSMOTE and BorderlineSMOTE also demonstrated their viability as alternative techniques, particularly in datasets where AUC improvements are critical. These insights highlight the importance of selecting appropriate resampling strategies tailored to the dataset and classification model in use.

\subsection{Discussion: Why AUC is Less Informative}

While AUC is a widely used metric for evaluating classifier performance, it can be less informative in certain practical scenarios. For instance, consider a dataset where the positive class is extremely rare. In such a situation, a classifier may achieve a high AUC by simply being very good at ranking instances, even if its absolute performance (e.g., precision or recall) in the positive class is poor. 

As an example, imagine a medical diagnostic test for a rare disease, analogous to the class imbalance observed in datasets such as Blood Transfusion or Haberman. Suppose the classifier correctly ranks almost all patients with the disease above those without it, resulting in an impressive AUC. However, if the threshold for a positive prediction is not carefully chosen, the test might still yield a large number of false positives, causing unnecessary anxiety and additional testing. Similarly, if a classifier is evaluated solely based on AUC without considering the G-mean, it may appear effective despite failing to properly distinguish the minority class, leading to suboptimal real-world performance.

\section{Conclusion}
In this paper, we introduced KWSMOTE, a novel enhancement to the SMOTE algorithm, specifically designed to address the challenges of class imbalance in datasets. By incorporating convex combination and kernel-based weighting function into the oversampling process, KWSMOTE generates synthetic samples that better represent the true distribution of the minority class. This method overcomes some of the limitations of traditional SMOTE, such as boundary marginalization and the generation of less realistic samples.

Through extensive experimentation on four real-world datasets, including Blood Transfusion, Haberman, Breast Cancer Wisconsin (Diagnostic), and Diabetes, we demonstrated that KWSMOTE significantly improves classification performance compared to both the raw data, original SMOTE, and SNOCC. Across multiple evaluation metrics, including F1-score, G-mean, and AUC, our approach consistently outperformed existing methods, particularly in complex classification scenarios involving imbalanced data.

The results show that KWSMOTE is a robust and effective tool for imbalanced dataset classification, offering better accuracy and model reliability. Future work may explore integrating KWSMOTE with other advanced machine learning techniques, such as deep learning models, to further enhance its applicability in large-scale and high-dimensional datasets.


\bibliographystyle{IEEEtran}
\bibliography{ref}

\begin{thebibliography}{10}
\providecommand{\url}[1]{#1}
\csname url@samestyle\endcsname
\providecommand{\newblock}{\relax}
\providecommand{\bibinfo}[2]{#2}
\providecommand{\BIBentrySTDinterwordspacing}{\spaceskip=0pt\relax}
\providecommand{\BIBentryALTinterwordstretchfactor}{4}
\providecommand{\BIBentryALTinterwordspacing}{\spaceskip=\fontdimen2\font plus
\BIBentryALTinterwordstretchfactor\fontdimen3\font minus \fontdimen4\font\relax}
\providecommand{\BIBforeignlanguage}[2]{{%
\expandafter\ifx\csname l@#1\endcsname\relax
\typeout{** WARNING: IEEEtran.bst: No hyphenation pattern has been}%
\typeout{** loaded for the language `#1'. Using the pattern for}%
\typeout{** the default language instead.}%
\else
\language=\csname l@#1\endcsname
\fi
#2}}
\providecommand{\BIBdecl}{\relax}
\BIBdecl

\bibitem{rawat2022reviewmethodshandlingclassimbalanced}
\BIBentryALTinterwordspacing
S.~S. Rawat and A.~K. Mishra, ``Review of methods for handling class-imbalanced in classification problems,'' 2022. [Online]. Available: \url{https://arxiv.org/abs/2211.05456}
\BIBentrySTDinterwordspacing

\bibitem{ali2022financial}
\BIBentryALTinterwordspacing
A.~Ali, S.~Razak, S.~Othman, T.~Eisa, A.~Al-dhaqm, M.~Nasser, T.~Elhassan, H.~Elshafie, and A.~Saif, ``Financial fraud detection based on machine learning: A systematic literature review,'' \emph{Applied Sciences}, vol.~12, p. 9637, September 2022. [Online]. Available: \url{https://doi.org/10.3390/app12199637}
\BIBentrySTDinterwordspacing

\bibitem{jiang2023deep}
\BIBentryALTinterwordspacing
X.~Jiang, Z.~Hu, S.~Wang, and Y.~Zhang, ``Deep learning for medical image-based cancer diagnosis,'' \emph{Cancers}, vol.~15, no.~14, p. 3608, 2023. [Online]. Available: \url{https://www.mdpi.com/2072-6694/15/14/3608}
\BIBentrySTDinterwordspacing

\bibitem{Chawla_2002}
\BIBentryALTinterwordspacing
N.~V. Chawla, K.~W. Bowyer, L.~O. Hall, and W.~P. Kegelmeyer, ``Smote: Synthetic minority over-sampling technique,'' \emph{Journal of Artificial Intelligence Research}, vol.~16, pp. 321--357, jun 2002. [Online]. Available: \url{http://dx.doi.org/10.1613/jair.953}
\BIBentrySTDinterwordspacing

\bibitem{LIN201717}
\BIBentryALTinterwordspacing
W.-C. Lin, C.-F. Tsai, Y.-H. Hu, and J.-S. Jhang, ``Clustering-based undersampling in class-imbalanced data,'' \emph{Information Sciences}, vol. 409-410, pp. 17--26, 2017. [Online]. Available: \url{https://www.sciencedirect.com/science/article/pii/S0020025517307235}
\BIBentrySTDinterwordspacing

\bibitem{9064578}
F.~Feng, K.-C. Li, J.~Shen, Q.~Zhou, and X.~Yang, ``Using cost-sensitive learning and feature selection algorithms to improve the performance of imbalanced classification,'' \emph{IEEE Access}, vol.~8, pp. 69\,979--69\,996, 2020.

\bibitem{1japkowicz2002class}
N.~Japkowicz and S.~Stephen, ``The class imbalance problem: A systematic study,'' \emph{Intelligent data analysis}, vol.~6, no.~5, pp. 429--449, 2002.

\bibitem{2chawla2004special}
N.~V. Chawla, N.~Japkowicz, and A.~Kotcz, ``Special issue on learning from imbalanced data sets,'' \emph{ACM SIGKDD explorations newsletter}, vol.~6, no.~1, pp. 1--6, 2004.

\bibitem{3su2011outlier}
X.~Su and C.-L. Tsai, ``Outlier detection,'' \emph{Wiley Interdisciplinary Reviews: Data Mining and Knowledge Discovery}, vol.~1, no.~3, pp. 261--268, 2011.

\bibitem{4he2009learning}
H.~He and E.~A. Garcia, ``Learning from imbalanced data,'' \emph{IEEE Transactions on knowledge and data engineering}, vol.~21, no.~9, pp. 1263--1284, 2009.

\bibitem{5chawla2010data}
N.~V. Chawla, ``Data mining for imbalanced datasets: An overview,'' \emph{Data mining and knowledge discovery handbook}, pp. 875--886, 2010.

\bibitem{6branco2016survey}
P.~Branco, L.~Torgo, and R.~P. Ribeiro, ``A survey of predictive modeling on imbalanced domains,'' \emph{ACM computing surveys (CSUR)}, vol.~49, no.~2, pp. 1--50, 2016.

\bibitem{IPLOF}
H.~Wang and Z.~Li, ``A robust outlier detection method in high-dimensional data based on mutual information and principal component analysis,'' in \emph{Advanced Intelligent Computing Technology and Applications}.\hskip 1em plus 0.5em minus 0.4em\relax Springer Nature Singapore, 2024, pp. 270--281.

\bibitem{7haixiang2017learning}
G.~Haixiang, L.~Yijing, J.~Shang, G.~Mingyun, H.~Yuanyue, and G.~Bing, ``Learning from class-imbalanced data: Review of methods and applications,'' \emph{Expert systems with applications}, vol.~73, pp. 220--239, 2017.

\bibitem{8lemaavztre2017imbalanced}
G.~Lema{\~A}{\v{Z}}tre, F.~Nogueira, and C.~K. Aridas, ``Imbalanced-learn: A python toolbox to tackle the curse of imbalanced datasets in machine learning,'' \emph{Journal of machine learning research}, vol.~18, no.~17, pp. 1--5, 2017.

\bibitem{ImproveSMOTEwang2021research}
S.~Wang, Y.~Dai, J.~Shen, and J.~Xuan, ``Research on expansion and classification of imbalanced data based on smote algorithm,'' \emph{Scientific reports}, vol.~11, no.~1, p. 24039, 2021.

\bibitem{SNOCCzheng2015oversampling}
Z.~Zheng, Y.~Cai, and Y.~Li, ``Oversampling method for imbalanced classification,'' \emph{Computing and Informatics}, vol.~34, no.~5, pp. 1017--1037, 2015.

\bibitem{blood_transfusion_service_center_176}
I.-C. Yeh, ``{Blood Transfusion Service Center},'' UCI Machine Learning Repository, 2008, {DOI}: https://doi.org/10.24432/C5GS39.

\bibitem{haberman's_survival_43}
S.~Haberman, ``{Haberman's Survival},'' UCI Machine Learning Repository, 1976, {DOI}: https://doi.org/10.24432/C5XK51.

\bibitem{breast_cancer_wisconsin_(diagnostic)_17}
W.~N. Street, W.~H. Wolberg, and O.~L. Mangasarian, ``{Breast Cancer Wisconsin (Diagnostic)},'' UCI Machine Learning Repository, 1993, {DOI}: https://doi.org/10.24432/C5DW2B.

\bibitem{smith1988using}
J.~W. Smith, J.~E. Everhart, W.~Dickson, W.~C. Knowler, and R.~S. Johannes, ``Using the adap learning algorithm to forecast the onset of diabetes mellitus,'' in \emph{Proceedings of the annual symposium on computer application in medical care}.\hskip 1em plus 0.5em minus 0.4em\relax American Medical Informatics Association, 1988, p. 261.

\end{thebibliography}

\end{document}